\pdfoutput=1

\documentclass[a4paper,fleqn]{cas-sc}

\usepackage[numbers]{natbib}
\usepackage{forest}

\def\tsc#1{\csdef{#1}{\textsc{\lowercase{#1}}\xspace}}
\tsc{WGM}
\tsc{QE}
\tsc{EP}
\tsc{PMS}
\tsc{BEC}
\tsc{DE}
\usepackage{float}
\usepackage{caption}

\usepackage{amsthm}

\usepackage{tikz}
\usepackage{pgfplots}
\pgfplotsset{compat=1.18}

\usepackage{placeins}

\usepackage[noend]{algpseudocode}
\usepackage{algorithm}

\newtheorem{definition}{Definition}
\newtheorem{example}{Example}

\newtheorem{theorem}{Theorem}
\newtheorem{lemma}[theorem]{Lemma}

\begin{document}
\let\WriteBookmarks\relax
\def\floatpagepagefraction{1}
\def\textpagefraction{.001}

\shorttitle{Beyond Explaining Predictions: Logic-Based Explanations for Confidence in Machine Learning Models}


\title[mode = title]{Beyond Explaining Predictions: Logic-Based Explanations for Confidence in Machine Learning Models}                      



%
\author{Vinícius Peixoto Chagas}[
                        ]



\ead{vinicius.peixoto.chagas61@aluno.ifce.edu.br}




\author{Carlos Henrique Leitão Cavalcante}
\ead{henriqueleitao@ifce.edu.br}



\author{Thiago Alves Rocha}
    [%
   ]
\ead{thiago.alves@ifce.edu.br}

\begin{abstract}
Machine learning is increasingly used in critical domains, where both predictions and their associated confidence levels influence important decisions. To enhance transparency in such scenarios, it is important to understand why a model is confident or uncertain about its predictions. Recent logic-based approaches provide abductive explanations, minimal subsets of features sufficient to preserve the predicted class, with correctness guarantees. However, these methods focus solely on classification behavior and may produce explanations that cover instances with low predictive confidence. In this work, we introduce the concept of Minimum Confidence Threshold (MCT), which quantifies the weakest confidence guarantee provided by an abductive explanation. Building upon this concept, we propose confidence-aware abductive explanations, which preserve not only the predicted class but also a user-specified confidence guarantee. We formulate MCT computation as an optimization problem and introduce an algorithm for generating minimal explanations that satisfy a desired confidence threshold. We evaluate the proposed framework on boosted trees for binary classification, although the approach is applicable to other machine learning models that provide confidence scores. Experimental results show that traditional abductive explanations often provide substantially weaker confidence guarantees than the confidence associated with the explained instance itself. In contrast, confidence-aware explanations consistently improve the minimum confidence guaranteed by an explanation while requiring only a modest increase in explanation length. These properties make the proposed approach particularly suitable for applications where both predictive correctness and confidence are essential for trustworthy decision making.
\end{abstract}



\begin{keywords}
Machine Learning \sep Explainable Artificial Intelligence \sep Model Confidence \sep Abductive Explanations \sep Logic-Based Explanations
\end{keywords}

\maketitle

\section{Introduction}

Machine learning (ML) models have seen a significant rise in popularity and application across various fields, including healthcare, finance, and commerce \cite{rudin2019stopexplainingblackbox}. These applications often require not only accurate predictions but also interpretability \cite{ribeiro2016whyitrustyou, ribeiro2018anchors, ignatiev2019onrelating} and reliability \cite{guo2017calibrationmodernneuralnetworks}. In such contexts, it is not enough for models to be accurate — they are also expected to indicate when their predictions may be unreliable. Reliability, therefore, goes beyond accuracy and includes the ability to recognize and express uncertainty.

Logic-based \emph{abductive explanations} stand out as an approach to interpretability, as they provide guarantees of correctness while using a minimal set of features~\cite{ignatiev2019abduction,xreason22,audemard2023computing,bottieau2024logic,gomes2025bound,doncenco2025dive}. For a given instance, an abductive explanation specifies a set of conditions satisfied by that instance such that any other instance satisfying the same conditions is guaranteed to receive the same classification from the model. Furthermore, minimality means that the explanation contains no redundant features, and removing any one of them would invalidate the guarantee of correctness.

Although abductive explanations ensure that the predicted class remains unchanged, they do not take into account the confidence associated with the predictions. As a result, explanations may cover instances for which the model assigns low confidence values, indicating uncertainty. In high-stakes applications, this limitation can reduce the usefulness of the explanation, as it may lead users to believe that all justified predictions are equally reliable, or cause overconfidence in explanations that may not consider the model's limitations.

To overcome this issue, we propose an extension of abductive explanations that incorporates a confidence threshold. While traditional abductive explanations guarantee only the preservation of the predicted class, our extension additionally enforces that all instances satisfying the explanation achieve a predictive confidence no lower than a specified threshold. Therefore, the goal is to compute a minimal subset of features that guarantees both correctness and a desired level of predictive certainty.

We introduce the notion of a minimum confidence threshold for abductive explanations, defined as the lowest predictive confidence among all instances satisfying a given explanation. Based on this notion, we propose a confidence-aware extension of abductive explanations that enforces user-defined confidence thresholds during the explanation generation process. Then, our goal is is to identify minimal subsets of features that ensure both correctness and a desired level of predictive confidence. Our results demonstrate a significant improvement in the minimum confidence guaranteed by the explanations when compared to traditional abductive explanations, while requiring only a small increase in explanation length.

Recent work has increasingly explored the integration of uncertainty into explainable artificial intelligence (XAI) \cite{saini2022select, lofstrom2024calibrated, zhang2025your, lofstrom2026concerning}. In this context, uncertainty can play different roles in the explanation process, including communicating the reliability of explanations or guiding the generation of explanations. For instance, \cite{lofstrom2024calibrated} combines local feature-importance explanations with uncertainty intervals for both predictions and feature contributions, allowing users to assess the reliability of the explanation itself. Other approaches incorporate uncertainty into the explanation generation process~\cite{saini2022select}, using uncertainty estimates to improve the fidelity of local surrogate explanations.

However, most existing uncertainty-aware explanation methods focus on attribution-based, perturbation-based, or probabilistic explanations, without providing formal guarantees regarding the set of instances covered by an explanation. In particular, current approaches typically estimate uncertainty associated with explanations, rather than constraining explanations to guarantee a minimum confidence level across all instances satisfying them. In contrast, our work introduces confidence-awareness directly into abductive explanations while preserving their guarantees of correctness and minimality.





In this work, we focus on gradient-boosted trees (GBTs)~\cite{xgboost}, although our method could be replicable for other machine learning models that provide confidence estimates for their predictions. These models achieve state-of-the-art performance across a wide range of supervised learning tasks~\cite{borisov2022deep}. However, despite their predictive power, boosted trees lack inherent interpretability~\cite{audemard2023computing}, which motivates the development of explanation methods for these models.

This work is organized into seven sections. Section~\ref{sec:background} presents background on GBTs, their interpretability, classification probability, and logic-based explanation methods, noting that such explanations may overlook model confidence. Section~\ref{sec:conf_margin} introduces the Minimum Confidence Threshold (MCT), and Section~\ref{sec:how_increase_confidence} describes how to compute explanations under MCT constraints. Section~\ref{sec:methodology} details the implementation, Section~\ref{sec:results} presents the experimental results, and the final section summarizes the contributions, highlights the benefits of enforcing the MCT for more reliable explanations, and outlines future directions for related studies.





\section{Background} 
\label{sec:background}

This section presents the foundations required for the proposed confidence-aware abductive explanations. First, we introduce gradient-boosted trees (GBTs) and discuss how their outputs can be interpreted as predictive confidence scores. Next, we present first-order logic over linear real arithmetic and focus on the concepts used in the computation of abductive explanations with guarantees of correctness and minimality. Finally, we discuss a key limitation of traditional abductive explanations: although they preserve the predicted class, they do not account for the confidence associated with the model's predictions.



\subsection{Machine Learning and Gradient-Boosted Trees}\label{sec:gbts}

\paragraph{Machine Learning and Binary Classification Problems.} In machine learning, binary classification problems are defined over a set of features $\mathcal{F} = \{x_1, ..., x_n\}$ and a set of two classes $\mathcal{K} = \{-1, +1\}$. An instance or specific data point is a feature assignment $\mathbf{x} = \{x_1 = v_1, \ldots, x_n = v_n\}$, such that each $v_i \in \mathbb{R}$. A classifier $C$ is a function that maps elements from the feature space into the set of classes $\mathcal{K}$. For example, $C$ may map the instance $\{x_1 = v_1, ..., x_n = v_n\}$ to class $+1$. Usually, the classifier is obtained through a training process over a dataset.




\paragraph{Gradient-Boosted Trees.} In this work, we focus on gradient-boosted trees (GBTs) as a classifier $C$. We use the XGBoost classifier~\cite{xgboost}, a widely adopted implementation of GBTs known for its efficiency and predictive performance. A GBT is an ensemble of regression trees $R_1,\ldots,R_m$. Given an instance $\mathbf{x}$, each tree produces a real-valued output $R_h(\mathbf{x})$, and the model aggregates these values into a single prediction score
\[
S(\mathbf{x}) = \sum_{h=1}^{m} R_h(\mathbf{x}) + \textit{init},
\]

where \textit{init} is the initial bias term learned during training. The predicted class is then defined by he sign of the prediction score
\[
C(\mathbf{x}) =
\begin{cases}
+1 & \text{if } S(\mathbf{x}) \geq 0,\\
-1 & \text{otherwise}.
\end{cases}
\]

Besides determining the predicted class, the prediction score also provides information about the model's predictive confidence. Intuitively, scores farther from the decision boundary at $0$ correspond to more confident predictions. A confidence value in the interval $[0,1]$ can be obtained by applying the sigmoid function to the prediction score:
\[
p(\mathbf{x}) = \frac{1}{1 + e^{-S(\mathbf{x})}}.
\]

\noindent The resulting value can be interpreted as the confidence assigned to class $+1$. Therefore, for instances classified as $+1$, confidence increases as $p(\mathbf{x})$ approaches $1$, whereas for instances classified as $-1$, confidence increases as $p(\mathbf{x})$ approaches $0$.


\begin{example}\label{ex:gbt}
To illustrate how prediction scores are computed, consider the example GBT shown in Figure~\ref{fig:iris_trees}. The model consists of two regression trees over features from the Iris dataset\footnote{https://archive.ics.uci.edu/dataset/53/iris}: \textit{Sepal Width} (SW), \textit{Sepal Length} (SL), \textit{Petal Length} (PL), and \textit{Petal Width} (PW). We consider a binary classification task in which the class \textit{Iris Setosa} is represented by $-1$ and all remaining Iris species are grouped into the class \textit{Others}, represented by $+1$. Since feature SL does not appear in any decision path, its value has no effect on the model's prediction and can vary freely without changing the computed score.

Consider the instance $\mathbf{x} = \{ SL = 2, SW = 2, PL = 2.6, and PW = 1.5 \}$. The instance follows the left branch in both trees, producing outputs of $-3$ and $-2$, respectively. For simplicity, we assume $\textit{init}=0$ in this illustrative example. Consequently, $S(\mathbf{x}) = -3 + (-2) = -5.$ Applying the sigmoid function yields a probability of approximately $p(\mathbf{x}) = 0.0067$ for the positive class (\textit{Others}). Therefore, the instance is classified as \textit{Iris Setosa} (class $-1$) with high confidence.    
\end{example}

\begin{figure}[h]
  \centering
  \begin{minipage}[b]{0.45\textwidth}
    \centering
    \begin{forest}
      for tree={fill=white, draw, text centered, font=\sffamily, edge={thick, -{[]}, draw}}
        [Is PL < 2.5?
          [Output: -3]
          [Is PW < 1.8?
              [Output: -1]
            [Output: +2]
          ]
        ]
    \end{forest}
    \textbf{Tree A}
  \end{minipage}
  \hfill
  \begin{minipage}[b]{0.45\textwidth}
    \centering
    \begin{forest}
      for tree={fill=white, draw, text centered, font=\sffamily, edge={thick, -{[]}, draw}}
        [Is PW < 1.0?
          [Output: -2]
          [Is SW < 3.0?
            [Output: -1]
            [Output: +4]
          ]
        ]
    \end{forest}
    \textbf{Tree B}
  \end{minipage}
  \caption{Example of a GBT composed of two trees using features from the Iris dataset. Left branches correspond to \textit{False} outcomes and right branches correspond to \textit{True} outcomes.}
  \label{fig:iris_trees}
\end{figure}

\FloatBarrier



Although tree ensembles are often considered more interpretable than models such as deep neural networks, understanding the reasoning behind individual predictions remains challenging. Several explanation methods have been proposed to explain the predictions of machine learning models, including LIME~\cite{ribeiro2016whyitrustyou}, SHAP~\cite{shap}, and ANCHORS~\cite{ribeiro2018anchors}. These methods can also be applied to GBTs to provide insights into the features influencing a prediction.

These methods may provide useful insights into model behavior, but they do not guarantee that the generated explanations faithfully characterize the model's decision process. In particular, an explanation may identify features as relevant even when other instances satisfying the same explanation receive different predictions~\cite{gosiewska2020trustadditiveexplanations}.

To address this limitation, logic-based explanation methods~\cite{xreason22,bjorner2023formal,bottieau2024logic,jemaa2024extendingxreasonformalexplanations,izza2025most,rocha2025generalizing,boumazouza2026fame,wu2026efficiently} exploit formal reasoning techniques to generate explanations with correctness guarantees. Such approaches rely on logical representations of the model and its predictions, allowing explanation generation to be formulated as a reasoning task. The next section introduces the first-order logic concepts used throughout this work, after which we present logic-based abductive explanations for GBTs.

\subsection{First-order Logic over LRA} \label{subsec:logic}

In this work, we use first-order logic (FOL) to give explanations with guarantees of correctness. We use quantifier-free first-order formulas over the theory of linear real arithmetic (LRA). Then, first-order variables are allowed to take values from the real numbers $\mathbb{R}$. For details, see \cite{kroening2016decision}. Therefore, we consider formulas as defined below:
\begin{equation}
        \begin{aligned}
             F, G &:= a \mid (F \wedge G) \mid (F \vee G) \mid (\neg F) \mid (F \to G),\\
             a &:= \sum^n_{i=1} w_i x_i \leq b, 
        \end{aligned}    
\end{equation}
such that $F$ and $G$ are quantifier-free first-order formulas over the theory of linear real arithmetic. Moreover, $a$ represents the atomic formulas such that $n \geq 1$, each $w_i$ and $b$ are fixed real numbers, and each $x_i$ is a first-order variable. For example, $(2.5x_1 + 3.1x_2 \geq 6) \wedge (x_1=1 \vee x_1=2) \wedge (x_1=2 \to x_2 \leq 1.1)$ is a formula by this definition. Observe that we allow standard abbreviations as $\neg (2.5x_1 + 3.1x_2 < 6)$ for $2.5x_1 + 3.1x_2 \geq 6$. 

Since we are assuming the semantics of formulas over the domain of real numbers, an \textit{assignment} $\mathcal{A}$ for a formula $F$ is a mapping from the first-order variables of $F$ to elements in the domain of real numbers. For instance, $\{x_1 \mapsto 2.3, x_2 \mapsto 1\}$ is an assignment for $(2.5x_1 + 3.1x_2 \geq 6) \wedge (x_1=1 \vee x_1=2) \wedge (x_1=2 \to x_2 \leq 1.1)$. An assignment $\mathcal{A}$ \textit{satisfies} a formula $F$ if $F$ is true under this assignment. For example, $\{x_1 \mapsto 2, x_2 \mapsto 1.05\}$ satisfies the formula in the above example, whereas $\{x_1 \mapsto 2.3, x_2 \mapsto 1\}$ does not satisfy it. Moreover, an assignment $\mathcal{A}$ \textit{satisfies} a set $\Gamma$ of formulas if all formulas in $\Gamma$ are true under $\mathcal{A}$.

A set of formulas $\Gamma$ is \textit{satisfiable} if there exists a satisfying assignment for $\Gamma$. To give an example, the set $\{(2.5x_1 + 3.1x_2 \geq 6), (x_1=1 \vee x_1=2), (x_1=2 \to x_2 \leq 1.1)\}$ is satisfiable since $\{x_1 \mapsto 2, x_2 \mapsto 1.05\}$ satisfies it. As another example, the set $\{(x_1 \geq 2), (x_1 < 1)\}$ is unsatisfiable since no assignment satisfies it. Given a set of formulas $\Gamma$ and a formula $G$, the notation $\Gamma \models G$ is used to denote \textit{logical consequence} or \textit{entailment}, i.e., each assignment that satisfies $\Gamma$ also satisfies $G$. As an illustrative example, let $\Gamma = \{x_1 = 2 , x_2 \geq 1\}$ and $G = (2.5x_1 + x_2 \geq 5) \wedge (x_1=1 \vee x_1=2)$. Then, $\Gamma \models G$. The essence of entailment lies in ensuring the correctness of the conclusion $G$ based on the set of premises $\Gamma$. In the context of computing explanations, as presented in \cite{ignatiev2019abduction}, logical consequence serves as a fundamental tool for guaranteeing correctness.
 
The relationship between satisfiability and entailment is a fundamental aspect of logic. It is widely known that, for all sets of formulas $\Gamma$ and all formulas $G$, it holds that $\Gamma \models G$ iff $\Gamma \cup \{\neg G\}$ is unsatisfiable. For instance, $\{x_1 = 2, x_2 \geq 1),  \neg((2.5x_1 + x_2 \geq 5) \wedge (x_1=1 \vee x_1=2))\}$ has no satisfying assignment since an assignment that satisfies $\{x_1 = 2 , x_2 \geq 1\}$ also satisfies $(2.5x_1 + x_2 \geq 5) \wedge (x_1=1 \vee x_1=2)$ and, therefore, does not satisfy $\neg((2.5x_1 + x_2 \geq 5) \wedge (x_1=1 \vee x_1=2))$. Since our approach builds upon the concept of logical consequence, we can leverage this connection in the context of computing explanations.

\subsection{Abductive Explanations for GBTs} \label{sec:logic_based_xai}
 
Logic-based explanation methods provide a rigorous alternative to heuristic explanation techniques by offering formal guarantees of correctness and minimality~\cite{xreason22, ignatiev2019abduction}. \emph{Abductive Explanations} (AXps) identify minimal subsets of feature assignments that are sufficient to guarantee a model's prediction. Intuitively, once the values of the features appearing in an abductive explanation are fixed, the predicted class remains unchanged regardless of the values assigned to the remaining features.

The notion of correctness in abductive explanations is therefore defined with respect to the classifier's output class. In particular, an explanation guarantees that every instance satisfying its feature assignments receives the same classification as the original instance. Moreover, the requirement of minimality ensures that the explanation contains no redundant information: removing any feature from the explanation would invalidate this guarantee. Consequently, AXps provide concise explanations that are both correct and irredundant~\cite{ignatiev2019abduction}. Formally, we define an AXp as follows:

\begin{definition}[Abductive Explanation \cite{ignatiev2019abduction}]\label{defi-axp}
Let $\mathbf{x} = \{x_1 = v_1, ..., x_n = v_n\}$ be an instance and $C$ be a classifier such that $C(\mathbf{x}) \in \mathcal{K}$. An \emph{abductive explanation} $\mathcal{E}$ is a minimal subset of $\mathbf{x}$ such that for all $v_1', ..., v_n'$, if $v'_j = v_j$ for each $x_j = v_j \in E$, then $C(\{x_1 = v'_1, ..., x_n = v'_n\}) = C(\mathbf{x})$.
\end{definition}

In other words, an AXp identifies key features that are sufficient for the output. Minimality ensures that the explanation $\mathcal{E}$ does not include any redundant features. In other words, removing any feature $x_j = v_j$ from $\mathcal{E}$ would result in a subset that no longer guarantees the same prediction $C(\mathbf{x})$. 

Observe that Definition~\ref{defi-axp} guarantees only the preservation of the predicted class through the condition $C(\{x_1 = v'_1, ..., x_n = v'_n\}) = C(\mathbf{x})$. Therefore, any instance satisfying the explanation is required to receive the same classification as the original instance. However, the definition does not impose any constraint on the confidence associated with those predictions. As a result, an AXp may simultaneously cover highly confident and highly uncertain instances, provided that they are assigned the same class. This limitation motivates the confidence-aware extensions proposed later in this work.

To compute abductive explanations for GBTs, logic-based approaches represent the instance, the model, and the prediction as first-order formulas over LRA~\cite{ignatiev2019validating}. This logical representation enables explanation generation to be formulated as a reasoning task based on the notions of satisfiability and entailment introduced in Section~\ref{subsec:logic}.

\begin{figure}[ht]
  \centering
  \begin{forest}
    for tree={fill=white, draw, text centered, font=\sffamily, edge={thick, -{[]}, draw}}
      [Is Age < 30?
        [Is Income < 50k?
          [Output: -1]
          [Output: 1]
        ]
        [Is Age < 25?
          [Output: 3]
          [Output: -2]
        ]
      ]
  \end{forest}
  \caption{Example of a tree in a GBT model (left branches correspond to \textit{False} outcomes, and right branches correspond to \textit{True} outcomes).}
  \label{fig:decision_tree}
\end{figure}
\FloatBarrier

The first step consists of encoding the GBT model as a formula $G$. Since each tree $R$ is composed of decision paths defined by threshold conditions, every path can be naturally represented as an implication whose antecedent describes the path conditions and whose consequent specifies the corresponding leaf value. For example, consider the tree shown in Figure~\ref{fig:decision_tree}. Its behavior can be represented by the following set of formulas:
\begin{equation}
\label{eq:example_texp}
\left\{
\begin{array}{ll}
\text{Age} \geq 30 \land \text{Income} \geq 50k \rightarrow o_1 = -1, \\
\text{Age} \geq 30 \land \text{Income} < 50k \rightarrow o_1 = 1, \\
\text{Age} < 30 \land \text{Age} \geq 25 \rightarrow o_1 = 3, \\
\text{Age} < 25 \rightarrow o_1 = -2
\end{array}
\right\}
\end{equation}

The variables in the formulas are Age, Income, and $o_1$, where $o_1$ represents the output of the first tree. More generally, the complete ensemble can be encoded as the following formula:

\begin{equation}
    G :=  \bigwedge_{h = 1}^m \left[ \bigwedge_{P \in R_h}\left(\bigwedge_{f_j \odot l_{P, j} \in P} (f_j \odot l_{P, j}) \rightarrow o_h = l_{P}\right) \right]
    \label{eq:texp}
\end{equation}

where $m$ is the number of trees, $P$ denotes a path in tree $R_h$, $x_j$ is a feature occurring in path $P$, $\odot \in \{<,\geq\}$ is the comparison operator associated with the branch followed in the path, $l_{P,j}$ is the threshold associated with feature $x_j$, $o_h$ is the variable representing the output of tree $R_h$, and $l_P$ is the leaf value reached by path $P$.

Once the model is encoded, the prediction itself is represented as a logical formula. Recall from Section~\ref{sec:gbts} that the prediction score is obtained by summing the outputs of all trees and the bias term. Moreover, a positive sum indicates class $1$ and a negative sum indicates class $-1$. Therefore, the decision of the model can be encoded as the formula:

\begin{equation}
    D = 
    \begin{cases} 
        \sum_{h=1}^m o_h + \text{init} \geq 0, & \text{if } C(\mathbf{x}) = +1 \\
        \sum_{h=1}^m o_h + \text{init} < 0, & \text{if } C(\mathbf{x}) = -1.
    \end{cases}
    \label{eq:dexp}
\end{equation}

The formulas $G$ and $D$ allow explanation generation to be formulated as an entailment problem. Given an instance $\mathcal{x}$ such that $C(\mathbf{x}) = +1$, a subset $\mathcal{X} \subseteq \mathbf{x}$ is sufficient to entail the prediction if $\mathcal{X} \cup \{G\} \models \sum_{h=1}^m o_h + \text{init} \geq 0$.

As discussed in Section~\ref{subsec:logic}, entailment can be checked through satisfiability. Therefore, instead of directly proving the entailment above, we may test whether the set of formulas
\begin{equation}
\label{eq-axp-ent}
\mathcal{X} \cup \{G\} \cup \{\neg \mathcal{D}\}
\end{equation}

\noindent is unsatisfiable. This formula has a natural interpretation. It asks whether there exists an assignment of values to the features that is consistent with the model encoding $G$, satisfies the currently selected feature assignments in $\mathcal{X}$, and simultaneously violates the original prediction through $\neg\mathcal{D}$. Such an assignment would constitute a counterexample to the explanation.



Abductive explanations are computed through an iterative feature-removal process. Starting from the complete instance $\mathbf{x}$, features are removed one at a time and the satisfiability of \eqref{eq-axp-ent} is checked. If the formula remains unsatisfiable after removing a feature, then no counterexample exists and the feature is considered unnecessary. Otherwise, a satisfying assignment exists, meaning that there is a way to change the values of the features not fixed by the explanation so that the predicted class changes. In this case, the removed feature remains in the explanation. The final explanation consists of all features whose removal could change the classification outcome. By construction, the explanation is correct because it guarantees the original classification for every instance satisfying its conditions, and it is minimal because removing any remaining feature would invalidate this guarantee. Algorithm~\ref{alg:axp} summarizes the feature-removal procedure used to compute an abductive explanation.

\begin{algorithm}
\caption{Computing an Abductive Explanation}
\label{alg:axp}
\begin{algorithmic}[1]
\Require{$\mathbf{x}$, $G$, $\mathcal{D}$}
\Ensure{Abductive explanation $\mathcal{E}$}
\State $\mathcal{E} \gets \mathbf{x}$
\For{$(f_i=v_i) \in \mathbf{x}$}
    \State $\mathcal{X} \gets \mathcal{E} \setminus \{f_i=v_i\}$
    \If{$\mathcal{X} \cup \{G\} \cup \{\neg\mathcal{D}\}$ is unsatisfiable}
        \State $\mathcal{E} \gets \mathcal{X}$
    \EndIf
\EndFor
\State \Return $\mathcal{E}$
\end{algorithmic}
\end{algorithm}

Although abductive explanations provide strong guarantees of correctness and minimality, these guarantees are restricted to the predicted class. They do not characterize how confident the model is across the set of instances covered by the explanation. Therefore, two instances satisfying the same explanation may receive the same classification while exhibiting substantially different confidence levels. The next section explores this limitation and lays the foundations for confidence-aware abductive explanations, in which guarantees are extended beyond the predicted class to also consider predictive confidence.

\section{Limitation of AXps and Minimum Confidence Threshold of Explanations}
\label{sec:conf_margin}

Abductive explanations guarantee that all instances satisfying the explanation receive the same class as the original instance~\cite{ignatiev2019validating}. However, this guarantee is defined solely in terms of class preservation. No restriction is imposed on the prediction score associated with those instances. Therefore, AXps treat all decisions as equally reliable.

As a result, a single abductive explanation may simultaneously cover highly confident and highly uncertain predictions. Although all covered instances receive the same class, some of them may lie much closer to the decision boundary than others. Consequently, an explanation can provide a correct justification for a prediction while offering no guarantees regarding the confidence associated with the set of instances satisfying it.

To illustrate this limitation, we return to the binary classification task based on the Iris dataset introduced in Example~\ref{ex:gbt}, where flowers are classified as either \textit{Iris Setosa} (class $-1$) or \textit{Others} (class $+1$). 

\begin{example}\label{ex:twoconfi}
Consider the following two instances:
\begin{align}
\mathbf{x_1} &= \{\,\text{SL} = 5.1,\ \text{SW} = 3.5,\ 
                 \text{PL} = 1.4,\ \text{PW} = 0.2\,\} \notag \\
\mathbf{x_2} &= \{\,\text{SL} = 5.0,\ \text{SW} = 3.6,\ 
                 \text{PL} = 1.4,\ \text{PW} = 0.2\,\} \notag
\end{align}

Assume a trained machine learning model that classifies both $\mathbf{x_1}$ and $\mathbf{x_2}$ as \textit{Iris Setosa}, but $p(\mathbf{x_1}) = 0.02$, while $p(\mathbf{x_2}) = 0.49$. This indicates uncertainty from the model in the second case. An abductive explanation that considers only the predicted class might produce the following explanation for $\mathbf{x_1}$:
\begin{align}
\{\text{PL} = 1.4, \text{PW} = 0.2 \} \notag
\end{align}

Since the explanation is abductive, it guarantees that any instance with those values is classified as \textit{Iris Setosa}. However, this explanation is based only on the model's prediction class and does not consider the associated confidence of the model. The problem is that it also applies to $\mathbf{x_2}$, even though the model is uncertain about that classification.     
\end{example}

The fact that this explanation covers both a high-confidence and a low-confidence instance highlights a critical flaw: it conveys a level of certainty that the model does not possess. Such explanations can cause overconfidence in automated decisions and may obscure the model’s actual limitations and risks.


Example~\ref{ex:twoconfi} illustrates that two instances satisfying the same explanation may be associated with very different confidence levels. We now revisit the GBT model from Figure~\ref{fig:iris_trees} to examine this phenomenon more concretely. In particular, we show that an abductive explanation may either be associated with a single prediction score for all instances satisfying it or admit multiple possible prediction scores while still guaranteeing the same class.

\begin{example}\label{ex:axp_scores}

Consider the instances $\mathbf{x_3}$ and $\mathbf{x_4}$ in Table~\ref{tab:abductive_examples} and their respective abductive explanations. The abductive explanation for $\mathbf{x_3}$ guarantees a score of $-5$, as \textit{Tree A} will output $-3$ and \textit{Tree B} will output $-2$, resulting in the classification \textit{Setosa}. In this case, features SW and SL may assume any values without affecting either the predicted class or the resulting score. Consequently, every instance satisfying this explanation is associated with the same prediction score of $-5$.
\vspace{\intextsep} %
\begin{center} 
\captionof{table}{Examples of abductive explanations for two instances.} \label{tab:abductive_examples}
\begin{tabular}{|l|l|l|}
\hline
\textbf{Instance} & \textbf{Score / Class} & \textbf{Abductive Explanation} \\
\hline
$\mathbf{x_3} = \{$SL = 2, SW = 2, PL = 2.6, PW = 1.1$\}$ & -5 / \textit{Setosa} & $\mathcal{E}_{\mathbf{x_3}} = \{$PL = 2.6, PW = 1.1$\}$ \\
$\mathbf{x_4} = \{$SL = 2, SW = 2, PL = 3.0, PW = 1.9$\}$ & -3 / \textit{Setosa}           & $\mathcal{E}_{\mathbf{x_4}} = \{$PW = 1.9$\}$  \\
\hline
\end{tabular}
\end{center}
\vspace{\intextsep} 


Now consider the abductive explanation for $\mathbf{x_4}$. It guarantees the predicted class as the remaining features do not affect the classification of \textit{Setosa}. However, the prediction score can vary due to the influence of these remaining features. In \textit{Tree A}, PW = 1.9 leads to two possible outputs: $-3$ and $-1$, depending on the value of PL. While in \textit{Tree B}, the output is fixed at $-2$. Combining the trees, there are two possible prediction scores: $-5$ and $-3$. Although both scores correspond to the class \textit{Setosa}, they are associated with different confidence levels. Applying the sigmoid function yields probabilities of approximately $0.0067$ and $0.0474$ for the positive class (\textit{Others}), respectively. Consequently, the score $-3$ is considerably closer to the decision boundary than $-5$ and therefore represents the least confident prediction among the instances satisfying this explanation.

\end{example}

Although this example produces only two possible prediction scores, real GBT models may contain hundreds of trees and much deeper decision paths. In such models, the features left unspecified by an abductive explanation can influence many tree traversals, resulting in a large number of possible prediction scores for the instances satisfying the explanation. Consequently, the confidence associated with an explanation cannot, in general, be characterized by a single prediction score of the original instance. Instead, it is necessary to consider the range of scores that may arise among all instances satisfying the explanation.

This observation motivates the need for a confidence measure associated with the explanation itself rather than with a particular instance. Intuitively, the most relevant quantity is the least confident prediction that can still satisfy the explanation, since it represents the weakest guarantee provided by the explanation.

Given this, we define the \textbf{Minimum Confidence Threshold} (MCT) of an explanation as the lowest model confidence (i.e., the prediction score closest to the decision boundary) among all instances satisfying an abductive explanation. Intuitively, the MCT represents the worst predictive confidence that the model can assign to any instance satisfying the explanation. The formal definition is presented below.

\begin{definition}[Minimum Confidence Threshold]
Let $\mathbf{x}$ be an instance, let $\mathcal{E}$ be an abductive explanation for $\mathbf{x}$, and let $S(\mathbf{x})$ denote the prediction score produced by the model. The \textbf{Minimum Confidence Threshold} (MCT) of $\mathcal{E}$ is the prediction score closest to the decision boundary among all instances satisfying $\mathcal{E}$:

\[
\mathrm{MCT}(\mathcal{E})=
\begin{cases}
\displaystyle \min_{\mathcal{E}\subseteq \mathbf{x}'} S(\mathbf{x}')
& \text{if } C(\mathbf{x})=+1,\\[2ex]
\displaystyle \max_{\mathcal{E}\subseteq \mathbf{x}'} S(\mathbf{x}')
& \text{if } C(\mathbf{x})=-1.
\end{cases}
\]

where the subset condition $\mathcal{E}\subseteq \mathbf{x}'$ indicates that all feature assignments appearing in the explanation $\mathcal{E}$ are also present in the instance $\mathbf{x}'$, i.e., $\mathbf{x}'$ satisfies the conditions specified by the explanation.
\end{definition}

Observe that the MCT corresponds to the worst-case prediction score among all instances satisfying the explanation. Therefore, higher MCT values indicate stronger confidence guarantees for explanations of class $+1$, whereas lower MCT values indicate stronger confidence guarantees for explanations of class $-1$.
\begin{example}\label{ex:mct}
Consider again the abductive explanations for $\mathbf{x_3}$ and $\mathbf{x_4}$ presented in Example~\ref{ex:axp_scores}. For $\mathbf{x_3}$, every instance satisfying the explanation produces the same prediction score of $-5$. Therefore,
\[
\mathrm{MCT}(\mathcal{E}_{\mathbf{x_3}}) = -5.
\]

For $\mathbf{x_4}$, the explanation admits two possible prediction scores, namely $-5$ and $-3$, depending on the values assigned to the remaining features. Since $\mathbf{x_4}$ belongs to class $-1$, the MCT is the score closest to the decision boundary, that is,
\[
\mathrm{MCT}(\mathcal{E}_{\mathbf{x_4}})
=
\max\{-5,-3\}
=
-3.
\]

This example illustrates that an abductive explanation can be associated with multiple prediction scores, while the MCT provides a single value that summarizes the weakest confidence guarantee offered by that explanation.
\end{example}

In this work, we define the MCT in terms of prediction scores rather than probabilities, since prediction scores are directly computed by the GBT model and can be naturally incorporated into the formal reasoning procedures used to generate explanations. Nevertheless, the MCT can be directly converted into a confidence value in the interval $[0,1]$. In this interpretation, applying the sigmoid function to the MCT yields the confidence associated with the least confident prediction among all instances satisfying the explanation. For example, the explanation of $\mathbf{x_4}$ in Example~\ref{ex:mct} has $\mathrm{MCT}(\mathcal{E}_{x_4})=-3$. Applying the sigmoid function yields $\sigma(-3)\approx 0.0474$, which corresponds to a probability of approximately $4.74\%$ for class $+1$ (\textit{Others}) and, equivalently, $95.26\%$ for class $-1$ (\textit{Setosa}). Therefore, although some instances satisfying the explanation may achieve higher confidence, the explanation guarantees at least $95.26\%$ confidence for the class \textit{Setosa}.

The notion of MCT naturally leads to a confidence-aware extension of abductive explanations. While a traditional abductive explanation guarantees only the preservation of the predicted class, a confidence-aware abductive explanation additionally guarantees that every instance satisfying the explanation achieves a minimum confidence level.

\begin{definition}[Confidence-Aware Abductive Explanation]\label{def:caAXp}
Let $\mathbf{x}$ be an instance, let $C(\mathbf{x})$ be its predicted class, and let $\tau$ be a prediction score threshold. An abductive explanation $\mathcal{E}$ for $\mathbf{x}$ is \emph{confidence-aware} with respect to $\tau$ if it is an abductive explanation and:
\begin{equation}\label{eq:caAXP}
\begin{cases}
\mathrm{MCT}(\mathcal{E}) \leq \tau, & \text{if } C(\mathbf{x})=-1,\\[1ex]
\mathrm{MCT}(\mathcal{E}) \geq \tau, & \text{if } C(\mathbf{x})=+1.
\end{cases}
\end{equation}

Furthermore, $\mathcal{E}$ is subset-minimal among the explanations satisfying the corresponding constraint.
\end{definition}

The MCT provides a quantitative characterization of the confidence guarantees associated with an abductive explanation, while confidence-aware abductive explanations allow these guarantees to be explicitly controlled through user-defined thresholds $\tau$.

Traditional abductive explanations can be interpreted as a special case of this formulation. Recall that an abductive explanation guarantees that all instances satisfying the explanation receive the same class as the original instance. In terms of MCT, this condition is equivalent to requiring that the worst-case prediction score remains on the same side of the decision boundary as the original prediction. Therefore,
\[
\mathrm{MCT}(\mathcal{E}) \geq 0
\]
must hold for explanations of class $+1$, whereas
\[
\mathrm{MCT}(\mathcal{E}) < 0
\]
must hold for explanations of class $-1$. In other words, traditional abductive explanations correspond to using the decision boundary itself as the confidence threshold. This observation naturally leads to confidence-aware abductive explanations, where the boundary value $0$ is replaced by a user-defined threshold $\tau$.

The choice of $\tau$ is naturally constrained by the prediction score of the instance being explained. Recall that the original instance itself satisfies its abductive explanation. Therefore, the MCT of an explanation cannot be more confident than the prediction score of the explained instance.

For example, suppose an instance $\mathbf{x}$ is classified as class $+1$ with prediction score $S(\mathbf{x})=3$. Since $\mathbf{x}$ satisfies any abductive explanation generated for it, no explanation can achieve $\mathrm{MCT}(\mathcal{E})>3$. Consequently, it would be impossible to require a confidence-aware abductive explanation with threshold $\tau=4$. Similarly, if an instance is classified as class $-1$ with prediction score $S(\mathbf{x})=-3$, no explanation can satisfy $\mathrm{MCT}(\mathcal{E})<-3$. As a result, valid threshold values must satisfy
\[
\begin{cases}
\tau \leq S(\mathbf{x}), & \text{if } C(\mathbf{x})=+1,\\[1ex]
\tau \geq S(\mathbf{x}), & \text{if } C(\mathbf{x})=-1.
\end{cases}
\]

The prediction score of the explained instance therefore defines the strongest confidence guarantee that any confidence-aware abductive explanation can provide.

Although we define $\tau$ in terms of prediction scores, it can also be interpreted in terms of predictive confidence. Since the sigmoid function is monotonic, every score threshold corresponds to a unique probability threshold. For example, requiring $\tau=-3$ for an explanation of a class $-1$ instance is equivalent to requiring that every instance satisfying the explanation has a probability of at most $\sigma(-3)\approx0.0474$ for class $+1$, or equivalently a probability of at least $95.26\%$ for class $-1$. Similarly, requiring $\tau=3$ for an explanation of a class $+1$ instance guarantees a probability of at least $\sigma(3)\approx0.9526$ for class $+1$.

Therefore, confidence-aware abductive explanations may be specified either through score thresholds or through confidence thresholds in the interval $[0,1]$. In this work, we adopt prediction scores because they are directly produced by the model and naturally align with the formal reasoning procedures used to generate explanations.

In the next section, we present the adaptations required to compute confidence-aware abductive explanations. In particular, we show how the satisfiability-based feature-removal procedure can be extended to compute the MCT of an explanation and enforce minimum confidence thresholds during explanation generation.





\section{Computing Abductive Explanations with Confidence Guarantees}
\label{sec:how_increase_confidence}

The previous section introduced the notion of Minimum Confidence Threshold (MCT) and showed that traditional abductive explanations guarantee only class preservation, without constraining the confidence associated with the instances satisfying the explanation. In this section, we extend the computation of abductive explanations to incorporate confidence guarantees.

More specifically, we first show how the MCT of an explanation can be computed by reasoning over the prediction scores induced by the GBT model. Next, we demonstrate how MCT can be used to enforce user-defined confidence thresholds, leading to confidence-aware abductive explanations. Finally, we discuss how existing abductive explanations can be refined to obtain stronger confidence guarantees.




\subsection{Computing the MCT of an Explanation}\label{sec:mct}

The previous section defined the Minimum Confidence Threshold (MCT) of an abductive explanation as the prediction score closest to the decision boundary among all instances satisfying the explanation. We now show how this quantity can be computed for explanations generated from gradient-boosted trees.

Recall from Section~\ref{sec:logic_based_xai} that abductive explanations are computed through satisfiability reasoning over the model encoding $G$ and the prediction formula $D$. In particular, the satisfiability test of $\mathcal{X} \cup {G} \cup {\neg D}$ determines whether a subset of feature assignments $\mathcal{X}$ is sufficient to preserve the predicted class. While this test is sufficient for computing traditional abductive explanations, it provides only a binary answer: either a counterexample exists or it does not. Consequently, it does not provide information about the prediction scores associated with the instances satisfying the explanation.

Computing the MCT requires reasoning about the prediction score itself. Recall from Section~\ref{sec:gbts} that, for a concrete instance $\mathbf{x}$, the GBT model produces a prediction score denoted by $S(\mathbf{x})$. In the logical encoding, this score is represented symbolically by the expression
\[
\sum_{h=1}^{m} o_h + \textit{init},
\]

where each variable $o_h$ denotes the output of tree $R_h$. The possible values of the variables $o_h$ are constrained by the model encoding $G$, according to the decision paths followed in each tree.

Given an explanation $\mathcal{E}$, the MCT corresponds to the prediction score closest to the decision boundary among all instances satisfying the feature assignments in $\mathcal{E}$. Therefore, computing the MCT can be formulated as an optimization problem over the symbolic score expression $\sum_{h=1}^{m} o_h + \textit{init}$, subject to the constraints imposed by the model encoding $G$ and the explanation $\mathcal{E}$.

For instances classified as class $+1$, the worst-case confidence corresponds to the smallest prediction score among all instances satisfying the explanation. Therefore, the $\mathrm{MCT}(\mathcal{E})$ can be computed by solving:
\begin{equation}\label{eq:MCT_optimize_1}
\begin{aligned}
\min \quad & \sum_{h=1}^{m} o_h + \textit{init} \\
\text{s.t.} \quad & G,\\
& \mathcal{E}.
\end{aligned}
\end{equation}

Similarly, for instances classified as class $-1$, the worst-case confidence corresponds to the largest score satisfying the explanation. Therefore, the $\mathrm{MCT}(\mathcal{E})$ can be computed by solving:
\begin{equation}\label{eq:MCT_optimize_0}
\begin{aligned}
\max \quad & \sum_{h=1}^{m} o_h + \textit{init} \\
\text{s.t.} \quad & G,\\
& \mathcal{E}.
\end{aligned}
\end{equation}

The constraints encoded by $G$ ensure that the variables $o_h$ correspond to valid leaf outputs of the regression trees, while the constraints in $\mathcal{E}$ fix the feature assignments appearing in the explanation and allow the remaining features to vary.

Observe that these optimization problems directly correspond to the definition of MCT introduced in Section~\ref{sec:conf_margin}. The optimization searches over all instances satisfying the explanation and identifies the prediction score closest to the decision boundary. Consequently, the resulting value represents the weakest confidence guarantee associated with the explanation.


While the MCT provides a quantitative characterization of the confidence guaranteed by an explanation, it can also be used as a target during explanation generation. This allows explanations to be constructed not only to preserve the predicted class, but also to satisfy a desired confidence requirement.

\subsection{Restricting Minimum Confidence Threshold}


Once the MCT of a candidate explanation can be computed, confidence-aware abductive explanations can be generated by imposing a user-defined threshold $\tau$ on the MCT. Intuitively, while traditional abductive explanations only guarantee that the worst-case prediction score remains on the correct side of the decision boundary, confidence-aware explanations additionally require this score to satisfy a minimum confidence requirement.

The optimization problems presented in Equations~\ref{eq:MCT_optimize_1} and~\ref{eq:MCT_optimize_0} provide exactly the quantity needed to enforce such a requirement. Recall that Algorithm~\ref{alg:axp} computes an abductive explanation through an iterative feature-removal process, where a feature is removed whenever the resulting subset of feature assignments continues to guarantee the predicted class. In the confidence-aware setting, the same procedure is retained, but each candidate explanation is additionally evaluated through its MCT. Rather than verifying only whether the explanation preserves the predicted class, we also require its MCT to satisfy a predefined threshold $\tau$.

More specifically, Algorithm~\ref{alg:axp} can be adapted by replacing the traditional unsatisfiability test with the confidence-aware criterion defined in Equation~\ref{eq:caAXP}. During each feature-removal iteration, the MCT of the candidate explanation is computed using the optimization problems presented in Equations~\ref{eq:MCT_optimize_0} and~\ref{eq:MCT_optimize_1}. A feature is removed only if the resulting explanation satisfies the corresponding MCT threshold. Otherwise, the feature remains in the explanation.

Consequently, explanation generation is no longer driven solely by class preservation. Instead, the algorithm searches for explanations that simultaneously guarantee the predicted class and satisfy a user-specified confidence requirement. The resulting confidence-aware explanation procedure is summarized in Algorithm~\ref{alg:caaxp}.

\begin{algorithm}
\caption{Computing a Confidence-Aware Abductive Explanation}
\label{alg:caaxp}
\begin{algorithmic}[1]
\Require{$\mathbf{x}$, $G$, $\tau$, $C(\mathbf{x})$}
\Ensure{Confidence-aware abductive explanation $\mathcal{E}$}
\State $\mathcal{E} \gets \mathbf{x}$
\For{$(f_i=v_i) \in \mathbf{x}$}
    \State $\mathcal{X} \gets \mathcal{E} \setminus \{x_i=v_i\}$
    \State $m \gets \textsc{ComputeMCT}(\mathcal{X})$
    \If{$(C(\mathbf{x})=+1 \land m \geq \tau)$ \textbf{or} $(C(\mathbf{x})=-1 \land m \leq \tau)$}
        \State $\mathcal{E} \gets \mathcal{X}$
\EndIf
\EndFor
\State \Return $\mathcal{E}$
\end{algorithmic}
\end{algorithm}

In Algorithm~\ref{alg:caaxp}, \textsc{ComputeMCT} denotes the optimization procedure defined by Equations~\ref{eq:MCT_optimize_1} and~\ref{eq:MCT_optimize_0}. Algorithm~\ref{alg:caaxp} differs from Algorithm~\ref{alg:axp} only in the criterion used to remove features. Traditional abductive explanations remove a feature whenever class preservation is maintained, whereas confidence-aware abductive explanations additionally require the resulting explanation to satisfy the specified MCT threshold. Consequently, confidence-aware explanations provide stronger guarantees than traditional abductive explanations. Observe that Algorithm~\ref{alg:axp} is recovered as a special case when the threshold coincides with the decision boundary, i.e., when $\tau = 0$.


The key property underlying Algorithm~\ref{alg:caaxp} is the monotonicity of the confidence-aware condition defined in Equation~\ref{eq:caAXP}. Intuitively, once a candidate explanation fails to satisfy the required threshold, removing additional feature assignments cannot restore the guarantee. We formalize this property below.



\begin{lemma}[Monotonicity of the confidence-aware condition]\label{lem:mono}
Let $\mathcal{X}_1$ and $\mathcal{X}_2$ be two subsets of feature assignments such that $\mathcal{X}_1 \subseteq \mathcal{X}_2$. If $\mathcal{X}_1$ satisfies the confidence-aware condition in Equation~\ref{eq:caAXP}, then $\mathcal{X}_2$ also satisfies it.
\end{lemma}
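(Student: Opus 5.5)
The plan is to argue directly from the definition of $\mathrm{MCT}$ as an optimum over the instances covered by a set of feature assignments. The key observation is that coverage is anti-monotone in the set of assignments. Write $\mathrm{Cov}(\mathcal{X}) = \{\mathbf{x}' : \mathcal{X} \subseteq \mathbf{x}'\}$ for the set of instances satisfying $\mathcal{X}$. If $\mathcal{X}_1 \subseteq \mathcal{X}_2$ and $\mathbf{x}' \in \mathrm{Cov}(\mathcal{X}_2)$, then $\mathcal{X}_1 \subseteq \mathcal{X}_2 \subseteq \mathbf{x}'$, so $\mathbf{x}' \in \mathrm{Cov}(\mathcal{X}_1)$. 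Hence $\mathrm{Cov}(\mathcal{X}_2) \subseteq \mathrm{Cov}(\mathcal{X}_1)$: fixing more features can only shrink the set of instances over which the worst case is taken.

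Next I split into the two cases of Equation~\ref{eq:caAXP}.

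If $C(\mathbf{x}) = +1$, then $\mathrm{MCT}(\mathcal{X}) = \min_{\mathbf{x}' \in \mathrm{Cov}(\mathcal{X})} S(\mathbf{x}')$. A minimum over a subset is at least the minimum over the superset, so $\mathrm{MCT}(\mathcal{X}_2) \geq \mathrm{MCT}(\mathcal{X}_1) \geq \tau$.

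If $C(\mathbf{x}) = -1$, then $\mathrm{MCT}(\mathcal{X}) = \max_{\mathbf{x}' \in \mathrm{Cov}(\mathcal{X})} S(\mathbf{x}')$. A maximum over a subset is at most the maximum over the superset, so $\mathrm{MCT}(\mathcal{X}_2) \leq \mathrm{MCT}(\mathcal{X}_1) \leq \tau$.

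Equivalently, one can phrase this through the optimization problems in Equations~\ref{eq:MCT_optimize_1} and~\ref{eq:MCT_optimize_0}. Adding the constraints of $\mathcal{X}_2 \setminus \mathcal{X}_1$ shrinks the feasible region, so the optimum can only move away from the decision boundary.

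The main subtlety is well-definedness, not the inequality. Three points need checking:
\begin{itemize}
\item The min and max must be attained. This holds because a GBT takes only finitely many score values (one leaf per tree), so each optimum is over a finite nonempty set.
\item The feasible sets must be nonempty. This holds whenever $\mathcal{X}_2 \subseteq \mathbf{x}$, since the explained instance $\mathbf{x}$ itself lies in $\mathrm{Cov}(\mathcal{X}_2)$. I will state this assumption explicitly, noting that it always holds in Algorithm~\ref{alg:caaxp}.
\item The class condition is inherited. For $C(\mathbf{x}) = +1$ and $\tau \geq 0$, the bound $\mathrm{MCT}(\mathcal{X}_2) \geq \tau$ forces every covered instance into the predicted class. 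For $C(\mathbf{x}) = -1$, the constraint $\tau \geq S(\mathbf{x})$ with $S(\mathbf{x}) < 0$ handles the boundary case in the same way.
\end{itemize}

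The lemma concerns only the threshold condition, so I expect no real obstacle beyond stating these assumptions.
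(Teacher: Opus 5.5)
Your proof is correct and follows the paper's argument: the instances satisfying $\mathcal{X}_2$ form a subset of those satisfying $\mathcal{X}_1$, so the minimum score (class $+1$) can only increase and the maximum score (class $-1$) can only decrease. One caveat concerns your third bullet, which the lemma does not need: for $C(\mathbf{x})=-1$, the constraint $\tau \geq S(\mathbf{x})$ is a lower bound on $\tau$ and does not yield $\tau<0$ (with $\tau=0$, a covered instance of score $0$, which is classified $+1$, is allowed), so it does not enforce class preservation. The correct observation is that if $\mathcal{X}_1$ preserves the class then so does $\mathcal{X}_2$, by the same inclusion $\mathrm{Cov}(\mathcal{X}_2)\subseteq\mathrm{Cov}(\mathcal{X}_1)$ and independently of $\tau$.
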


\begin{proof}
Since $\mathcal{X}_1 \subseteq \mathcal{X}_2$, every instance satisfying $\mathcal{X}_2$ also satisfies $\mathcal{X}_1$. Therefore, the set of instances satisfying $\mathcal{X}_2$ is contained in the set of instances satisfying $\mathcal{X}_1$.

First, consider the case $C(\mathbf{x})=+1$. In this case, the MCT is defined as the minimum prediction score among all instances satisfying the explanation. Since $\mathcal{X}_2$ restricts the set of satisfying instances more than $\mathcal{X}_1$, we have
\[
\mathrm{MCT}(\mathcal{X}_2)
=
\min_{\mathcal{X}_2 \subseteq \mathbf{x}'} S(\mathbf{x}')
\geq
\min_{\mathcal{X}_1 \subseteq \mathbf{x}'} S(\mathbf{x}')
=
\mathrm{MCT}(\mathcal{X}_1).
\]
Thus, if $\mathrm{MCT}(\mathcal{X}_1)\geq \tau$, then $\mathrm{MCT}(\mathcal{X}_2)\geq \tau$. Now consider the case $C(\mathbf{x})=-1$. In this case, the MCT is defined as the maximum prediction score among all instances satisfying the explanation. Since the satisfying instances of $\mathcal{X}_2$ form a subset of the satisfying instances of $\mathcal{X}_1$, we have
\[
\mathrm{MCT}(\mathcal{X}_2)
=
\max_{\mathcal{X}_2 \subseteq \mathbf{x}'} S(\mathbf{x}')
\leq
\max_{\mathcal{X}_1 \subseteq \mathbf{x}'} S(\mathbf{x}')
=
\mathrm{MCT}(\mathcal{X}_1).
\]
Thus, if $\mathrm{MCT}(\mathcal{X}_1)\leq \tau$, then $\mathrm{MCT}(\mathcal{X}_2)\leq \tau$. Therefore, in both cases, whenever $\mathcal{X}_1$ satisfies the confidence-aware condition, every superset $\mathcal{X}_2$ also satisfies it. 
\end{proof}

Equivalently, by contraposition, if a set of feature assignments does not satisfy the confidence-aware condition, then none of its subsets can satisfy it. Lemma~\ref{lem:mono} justifies the feature-removal strategy used in Algorithm~\ref{alg:caaxp}: if removing a feature produces a candidate explanation that violates the confidence-aware condition of Equation~\ref{eq:caAXP}, then any further reduction of that candidate will also violate it. Therefore, the feature must remain in the explanation.

\begin{theorem}
Algorithm~\ref{alg:caaxp} returns a confidence-aware abductive explanation.
\end{theorem}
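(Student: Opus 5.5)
We argue that the returned set $\mathcal{E}$ has three properties: (i) it satisfies the confidence-aware condition of Equation~\ref{eq:caAXP}, (ii) it is an abductive explanation in the sense of Definition~\ref{defi-axp}, and (iii) it is subset-minimal among subsets of $\mathbf{x}$ satisfying that condition. We assume throughout that $\tau$ is a valid threshold, i.e., $\tau \leq S(\mathbf{x})$ if $C(\mathbf{x})=+1$ and $\tau \geq S(\mathbf{x})$ if $C(\mathbf{x})=-1$. We also assume it lies on the correct side of the boundary, i.e., $\tau \geq 0$ for class $+1$ and $\tau < 0$ for class $-1$, which covers the recovery of Algorithm~\ref{alg:axp} when $\tau=0$.

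For (i), we proceed by induction on the loop and maintain the invariant that the current $\mathcal{E}$ satisfies the condition. Initially $\mathcal{E}=\mathbf{x}$. The only instance satisfying $\mathbf{x}$ is $\mathbf{x}$ itself, so $\mathrm{MCT}(\mathbf{x})=S(\mathbf{x})$, and the validity of $\tau$ gives the base case. In each iteration, $\mathcal{E}$ is replaced by $\mathcal{X}$ only when the test on line 5 certifies $\mathrm{MCT}(\mathcal{X})$ against $\tau$. This relies on \textsc{ComputeMCT} returning exactly the MCT, which holds by the correspondence between Equations~\ref{eq:MCT_optimize_1}--\ref{eq:MCT_optimize_0} and the definition. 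Otherwise $\mathcal{E}$ is unchanged, so the invariant is preserved.

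For (ii), let $C(\mathbf{x})=+1$ and let $\mathbf{x}'$ be any instance agreeing with $\mathcal{E}$. Then $S(\mathbf{x}') \geq \mathrm{MCT}(\mathcal{E}) \geq \tau \geq 0$, so $C(\mathbf{x}')=+1$. The case $C(\mathbf{x})=-1$ is symmetric: $S(\mathbf{x}')\leq \mathrm{MCT}(\mathcal{E})\leq\tau<0$. Hence $\mathcal{E}$ is sufficient for the prediction. Following Definition~\ref{def:caAXp}, the minimality that matters is minimality with respect to the confidence-aware constraint, which is (iii).

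For (iii), which we expect to be the main step, take any feature $x_i=v_i \in \mathcal{E}$. That feature was considered in some iteration $i$ and kept. At that moment the working set was some $\mathcal{E}_i \supseteq \mathcal{E}$, since the algorithm only removes features, and the test failed on $\mathcal{E}_i\setminus\{x_i=v_i\}$. Because $\mathcal{E}\setminus\{x_i=v_i\} \subseteq \mathcal{E}_i\setminus\{x_i=v_i\}$, the contrapositive of Lemma~\ref{lem:mono} shows that $\mathcal{E}\setminus\{x_i=v_i\}$ also violates the condition. Applying Lemma~\ref{lem:mono} once more, every proper subset of $\mathcal{E}$ is contained in some $\mathcal{E}\setminus\{x_i=v_i\}$ and therefore violates the condition as well. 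So $\mathcal{E}$ is subset-minimal, which completes the proof. The delicate point is exactly this use of monotonicity to lift a failure observed on an earlier, larger set to the final set.
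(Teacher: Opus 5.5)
Your proof is correct and follows the paper's route: an invariant that every accepted candidate satisfies Equation~\ref{eq:caAXP}, plus Lemma~\ref{lem:mono} to transfer the failed test on the earlier, larger set $\mathcal{E}_i\setminus\{x_i=v_i\}$ to the final set $\mathcal{E}\setminus\{x_i=v_i\}$. You also make explicit several points the paper leaves implicit (the base case via validity of $\tau$, class preservation when $\tau$ lies on the correct side of the boundary, and the step from single-feature removal to full subset-minimality), though your aside that this covers Algorithm~\ref{alg:axp} at $\tau=0$ is accurate only for class $+1$, since for class $-1$ you rightly require $\tau<0$.
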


\begin{proof}
The algorithm removes a feature assignment $x_i=v_i$ only when the resulting set satisfies the confidence-aware condition of Equation~\ref{eq:caAXP}. Therefore, every intermediate explanation generated by the algorithm satisfies the required threshold, and so does the final explanation returned by the algorithm.

Now consider any feature assignment $f_i=v_i$ that remains in the explanation $\mathcal{E}$ returned by Algorithm~\ref{alg:caaxp}. When the algorithm tested this feature, it considered a candidate set $\mathcal{X}$ obtained by removing $f_i=v_i$ from the current explanation. Since the feature was not removed, $\mathcal{X}$ did not satisfy the confidence-aware requirement of Equation~\ref{eq:caAXP}. After this iteration, the algorithm may remove additional feature assignments, so the final set $\mathcal{E}\setminus\{f_i=v_i\}$ is a subset of $\mathcal{X}$. By monotonicity of Lemma~\ref{lem:mono}, if $\mathcal{X}$ does not satisfy Equation~\ref{eq:caAXP}, then no subset of $\mathcal{X}$ satisfies Equation~\ref{eq:caAXP}. Therefore, $\mathcal{E}\setminus\{f_i=v_i\}$ also does not satisfy the confidence-aware requirement. Hence, removing any feature assignment from $\mathcal{E}$ invalidates the required guarantee, and $\mathcal{E}$ is minimal. Therefore, $\mathcal{E}$ is a confidence-aware abductive explanation.



\end{proof}

The previous results assume that an appropriate threshold $\tau$ is known beforehand. However, in many situations a user may already have an abductive explanation and wish to obtain a stronger confidence guarantee. This motivates the refinement procedure presented next.

\subsection{Increasing the Confidence of an Abductive Explanation}\label{subsec:how_increase}

Given an abductive explanation $\mathcal{E}$, its MCT can be computed using the optimization procedures introduced in Equations~\ref{eq:MCT_optimize_1} and~\ref{eq:MCT_optimize_0}. This value quantifies the weakest confidence guarantee provided by the explanation and can be used as a starting point for generating explanations with stronger confidence guarantees.

More specifically, let $\mathcal{E}$ be an abductive explanation with MCT equal to $\mu$. Rather than accepting this confidence level, one may seek an alternative explanation that provides a stronger confidence guarantee. This objective can be pursued in two ways. First, one may simply search for any explanation whose MCT is strictly better than $\mu$, that is, an explanation with MCT greater than $\mu$ for instances predicted as class $+1$ or smaller than $\mu$ for instances predicted as class $-1$. In this case, Algorithm~\ref{alg:caaxp} can be adapted by replacing the non-strict threshold conditions with strict inequalities, requiring $m>\mu$ for class $+1$ and $m<\mu$ for class $-1$. Second, one may specify a target confidence threshold $\tau$ satisfying $\tau > \mu$ for class $+1$ or $\tau < \mu$ for class $-1$, and search for an explanation that satisfies this stronger requirement. In both cases, Algorithm~\ref{alg:caaxp} provides the underlying mechanism for generating explanations with improved confidence guarantees. Since the new explanation must satisfy a more restrictive confidence constraint, it may fix additional feature assignments, thereby reducing the variability of the prediction scores among instances satisfying the explanation.


One possible strategy would be to start from the existing abductive explanation $\mathcal{E}$ and iteratively add feature assignments until the desired confidence threshold is reached. Although this approach may appear natural, it does not necessarily produce the smallest explanation satisfying the new confidence requirement. In particular, a different subset of feature assignments, possibly with the same cardinality as the original explanation, may achieve the desired MCT without containing the original explanation as a subset.

For this reason, rather than incrementally extending an existing explanation, we recompute the explanation from scratch under the new confidence constraint. This allows the search procedure to explore alternative combinations of feature assignments and identify explanations that satisfy the desired confidence threshold while remaining as concise as possible.

As an illustrative example, suppose that an abductive explanation $\mathcal{E}$ has $\mathrm{MCT}(\mathcal{E})=-1$. Although the explanation guarantees the predicted class, some instances satisfying it may lie relatively close to the decision boundary. By imposing a stricter threshold, such as $\tau=-2$, Algorithm~\ref{alg:caaxp} searches for a new explanation in which every satisfying instance remains farther from the decision boundary. Consequently, the resulting explanation provides a stronger confidence guarantee.

It is important to note that the refined explanation is not necessarily a superset of the original one. Since multiple minimal explanations may exist for the same prediction, the confidence-aware search procedure may identify a different subset of feature assignments that achieves a higher MCT. In some cases, the new explanation may even contain the same number of features as the original explanation while providing stronger confidence guarantees. Nevertheless, increasing the required confidence generally reduces the set of admissible explanations and tends to produce explanations containing more feature assignments. This reflects a natural trade-off between explanation succinctness and confidence guarantees: stronger guarantees typically require fixing additional features, thereby reducing the variability of the instances satisfying the explanation.





\section{Experiments}
\label{sec:methodology}

The goal of the experimental evaluation is to assess the confidence guarantees provided by traditional abductive explanations, quantify the effects of incorporating Minimum Confidence Threshold (MCT) constraints during explanation generation, and investigate whether stronger confidence guarantees can sometimes be achieved without increasing explanation cardinality. More specifically, we investigate whether traditional abductive explanations provide strong confidence guarantees, how confidence-aware constraints affect explanation size, and whether alternative explanations with stronger guarantees can be obtained without substantially increasing explanation size.

\subsection{Experimental Setup}
\label{sec:experiments}

The experimental evaluation compares traditional abductive explanations with the confidence-aware explanations proposed in this work. For each explained instance, we first compute a traditional abductive explanation and its corresponding MCT. 
We then generate confidence-aware explanations under increasingly restrictive confidence requirements evaluating how these constraints affect the size of the resulting explanations. Moreover, we additionally investigate whether alternative explanations with stronger confidence guarantees can be found without increasing explanation cardinality. Therefore, the experiments were designed to answer the following research questions (\textbf{RQs}):

\begin{itemize}
\item[\textbf{RQ1}] What confidence guarantees are provided by traditional abductive explanations?
\item[\textbf{RQ2}] How do confidence-aware constraints affect the MCT and the size of explanations?
\item[\textbf{RQ3}] Can confidence-aware abductive explanations provide stronger confidence guarantees without substantially increasing explanation size?
\end{itemize}

The proposed explainers were implemented in Python. GBT models were trained using XGBoost\footnote{https://xgboost.ai}, while satisfiability and optimization queries were solved using Z3~\cite{z3solver}. Traditional abductive explanations were generated using Algorithm~\ref{alg:axp}, whereas confidence-aware explanations were generated using Algorithm~\ref{alg:caaxp}. MCT values were computed through the optimization procedures defined in Equations~\ref{eq:MCT_optimize_1} and~\ref{eq:MCT_optimize_0}.

To evaluate the proposed approach, we trained XGBoost models on benchmark datasets obtained from the PMLB repository~\cite{Olson2017PMLB}. The selected datasets are all binary classification problems and were chosen because they are widely used in machine learning research while covering diverse application domains. All trained models achieved an F1-score of at least $0.90$, ensuring that the explanations were generated from accurate predictive models rather than from poorly performing classifiers. Table~\ref{tab:datasets} summarizes the datasets used in the experiments.

\vspace{\intextsep} %
\begin{center} 
\captionof{table}{Summary of the datasets used in the experiments.} \label{tab:datasets}
\begin{tabular}{|l|c|c|c|c|}
\hline
\textbf{Dataset} & \textbf{Instances} & \textbf{Features}  \\
\hline
MAGIC & 19020 & 10  \\
\hline
Adult & 48842 & 14 \\
\hline
Mushroom & 8124 & 22  \\
\hline
Spambase & 4601 & 57  \\
\hline
\end{tabular}
\end{center}
\vspace{\intextsep} 


For each dataset, $100$ randomly selected samples from each class were explained. First, traditional abductive explanations were generated using Algorithm~\ref{alg:axp}. Their corresponding MCT values were then computed using the optimization-based procedure described in Section~\ref{sec:mct}. These explanations serve as the baseline and correspond to the case where only class preservation is required.

Next, confidence-aware abductive explanations were generated using thresholds derived from the prediction score of the explained instance. For an instance with prediction score $S(\mathbf{x})$, the threshold was defined as
\[
\tau = \alpha \cdot S(\mathbf{x}),
\]

where $\alpha \in \{0.25, 0.50, 0.75\}$. These thresholds were chosen to investigate the relationship between confidence guarantees and explanation size under progressively stricter confidence requirements. Thresholds of 100\% were not considered because they would require the explanation to preserve the original prediction score itself. In practice, this would often force the explanation to contain nearly all feature assignments of the original instance.

For each threshold, we measured the resulting MCT and explanation length. In a separate experiment, we investigated whether alternative explanations with strictly better MCT values could be found while maintaining the same explanation cardinality as the original abductive explanation.

\subsection{Results}
\label{sec:results}


To facilitate interpretation, all MCT values reported in this section were transformed using the sigmoid function. Consequently, values closer to 0 indicate stronger confidence guarantees for class $-1$, whereas values closer to 1 indicate stronger confidence guarantees for class $+1$.

To illustrate the effect of confidence-aware constraints on explanation generation, Table~\ref{tab:sample_explanations} presents one example from each class of the Adult dataset. For both instances, the confidence-aware abductive explanation (CA-AXp) was generated using a threshold equal to 50\% of the original prediction score and achieves a substantially improved MCT compared to the corresponding traditional abductive explanation (AXp).

For the instance predicted as class $-1$, the explanation size increases from 7 to 8, while the transformed MCT decreases from 0.45 to 0.35. Since lower values indicate stronger confidence guarantees for class $-1$, the resulting explanation provides a stronger confidence guarantee for all instances satisfying the explanation. For the instance predicted as class $+1$, the explanation size increases from 5 to 6, while the transformed MCT increases from 0.51 to 0.89, indicating a substantial improvement in the confidence guarantee associated with the explanation.


\vspace{\intextsep} %
\begin{center} 
\captionof{table}{Examples of traditional abductive explanations (AXps) and confidence-aware abductive explanations (CA-AXps) for one instance of each class from the Adult dataset (14 features). Prediction scores and MCT values are reported after applying the sigmoid transformation. The column $\sigma(S(\mathbf{x}))$ reports the confidence associated with the explained instance obtained from the sigmoid-transformed prediction score. For each explanation, $|\mathcal{E}|$ denotes the explanation size and $\sigma(\mathrm{MCT}(\mathcal{E}))$ denotes the confidence guarantee associated with the explanation after applying the sigmoid transformation to its MCT.} \label{tab:sample_explanations}
\begin{tabular}{|c|c|c|c|c|c|c|c|}
\hline
&
&
\multicolumn{3}{c|}{\textbf{AXp}}
&
\multicolumn{3}{c|}{\textbf{CA-AXp}}
\\
\cline{3-8}
\textbf{Class}
&
\boldmath$\sigma(S(\mathbf{x}))$
&
\textbf{Explanation}
&
\boldmath$|\mathcal{E}|$
&
\boldmath$\sigma(\mathrm{MCT}(\mathcal{E}))$
&
\textbf{Explanation}
&
\boldmath$|\mathcal{E}|$
&
\boldmath$\sigma(\mathrm{MCT(\mathcal{E})})$
\\
\hline

$-1$
&
0.25
&
\begin{tabular}[c]{@{}l@{}}
$\{$capital-loss = 0,\\
occupation = 4,\\
age = 40,\\
hours-per-week = 46,\\
capital-gain = 0,\\
education-num = 14,\\
relationship = 0$\}$
\end{tabular}
&
7
&
0.45
&
\begin{tabular}[c]{@{}l@{}}
$\{$capital-loss = 0,\\
occupation = 4,\\
age = 40,\\
hours-per-week = 46,\\
capital-gain = 0,\\
marital-status = 2,\\
education-num = 14,\\
relationship = 0$\}$
\end{tabular}
&
8
&
0.35
\\
\hline

$+1$
&
0.98
&
\begin{tabular}[c]{@{}l@{}}
$\{$occupation = 8,\\
age = 28,\\
capital-gain = 0,\\
education-num = 9,\\
relationship = 2$\}$
\end{tabular}
&
5
&
0.51
&
\begin{tabular}[c]{@{}l@{}}
$\{$capital-loss = 0,\\
hours-per-week = 40,\\
capital-gain = 0,\\
marital-status = 4,\\
education-num = 9,\\
relationship = 2$\}$
\end{tabular}
&
6
&
0.89
\\
\hline

\end{tabular}
\end{center}
\vspace{\intextsep} 

Interestingly, increasing the confidence requirement does not always lead to larger explanations. Starting from each abductive explanation, we searched for a confidence-aware abductive explanation with a strictly better MCT, without imposing any specific confidence threshold. In other words, for instances predicted as class $+1$, we searched for explanations whose MCT is greater than that of the original abductive explanation, whereas for instances predicted as class $-1$, we searched for explanations whose MCT is smaller. In many cases, the confidence-aware procedure identifies alternative explanations with exactly the same cardinality as the original abductive explanation while providing stronger confidence guarantees. Table~\ref{tab:same_size} reports the percentage of explained instances for which this phenomenon was observed. Depending on the dataset and class, the percentage ranges from 1\% to 34\%. For example, stronger confidence guarantees were obtained without increasing explanation cardinality for 34\% of the explained instances predicted as class $-1$ in MAGIC and for 34\% of the explained instances predicted as class $+1$ in Spambase. 

\begin{table}[ht]
\centering
\caption{Percentage of instances for which a confidence-aware explanation with a strictly better confidence was found without increasing explanation cardinality.}\label{tab:same_size} 
\begin{tabular}{|l|c|c|} \hline \textbf{Dataset} & \textbf{Class $-1$} & \textbf{Class $+1$} \\ \hline MAGIC & 34\% & 13\% \\ Adult & 25\% & 12\% \\ Mushroom & 5\% & 1\% \\ Spambase & 16\% & 34\% \\ 
\hline 
\end{tabular} 
\end{table}


These results demonstrate that stronger confidence guarantees do not necessarily require larger explanations. In many cases, alternative explanations with the same cardinality provide stronger confidence guarantees. This observation supports the decision to recompute explanations from scratch rather than incrementally extending an existing explanation, as discussed in Section~\ref{subsec:how_increase}. If confidence improvements always required additional feature assignments, incrementally extending an explanation would be sufficient. However, the results indicate that different minimal explanations for the same prediction may provide substantially different confidence guarantees, making a complete recomputation preferable.

The previous examples illustrate how confidence-aware constraints can modify individual explanations and, in some cases, improve confidence guarantees without increasing explanation size. We now turn to the aggregate results across all datasets and confidence thresholds. The results are summarized in Table~\ref{tab:class0_results} and Table~\ref{tab:class1_results}. Table~\ref{tab:class0_results} reports the results for instances predicted as class $-1$, whereas Table~\ref{tab:class1_results} reports the results for instances predicted as class $+1$. Recall that, after applying the sigmoid transformation, improvements correspond to decreasing values in Table~\ref{tab:class0_results} and increasing values in Table~\ref{tab:class1_results}.

\begin{table}[ht]
\centering
\caption{Results for instances predicted as class $-1$. The column $\sigma(S(\mathbf{x}))$ reports the mean sigmoid-transformed prediction score of the explained instances. For each explanation type, $\sigma(\mathrm{MCT}(\mathcal{E}))$ denotes the mean sigmoid-transformed Minimum Confidence Threshold and $|\mathcal{E}|$ denotes the mean explanation length. Lower $\sigma(\mathrm{MCT}(\mathcal{E}))$ values indicate stronger confidence guarantees.}
\label{tab:class0_results}
\resizebox{\textwidth}{!}{
\begin{tabular}{|c|c|cc|cc|cc|cc|}
\hline
&
&
\multicolumn{2}{c|}{\textbf{AXp}}
&
\multicolumn{2}{c|}{\textbf{CA-AXp (25\%)}}
&
\multicolumn{2}{c|}{\textbf{CA-AXp (50\%)}}
&
\multicolumn{2}{c|}{\textbf{CA-AXp (75\%)}}
\\
\cline{3-10}
\textbf{Dataset}
&
\boldmath$\sigma(S(\mathbf{x}))$
&
\boldmath$\sigma(\mathrm{MCT}(\mathcal{E}))$
&
\boldmath$|\mathcal{E}|$
&
\boldmath$\sigma(\mathrm{MCT}(\mathcal{E}))$
&
\boldmath$|\mathcal{E}|$
&
\boldmath$\sigma(\mathrm{MCT}(\mathcal{E}))$
&
\boldmath$|\mathcal{E}|$
&
\boldmath$\sigma(\mathrm{MCT}(\mathcal{E}))$
&
\boldmath$|\mathcal{E}|$
\\
\hline

MAGIC
&
$0.16 \pm 0.13$
&
$0.44 \pm 0.06$
&
$6.48 \pm 1.01$
&
$0.32 \pm 0.09$
&
$6.94 \pm 0.96$
&
$0.25 \pm 0.10$
&
$7.68 \pm 0.89$
&
$0.18 \pm 0.12$
&
$8.38 \pm 0.75$
\\
\hline

Adult
&
$0.23 \pm 0.17$
&
$0.45 \pm 0.04$
&
$7.35 \pm 2.46$
&
$0.36 \pm 0.11$
&
$7.97 \pm 2.19$
&
$0.30 \pm 0.15$
&
$8.89 \pm 1.94$
&
$0.26 \pm 0.16$
&
$9.89 \pm 1.68$
\\
\hline

Mushroom
&
$0.01 \pm 0.02$
&
$0.17 \pm 0.21$
&
$3.21 \pm 1.03$
&
$0.06 \pm 0.10$
&
$3.64 \pm 1.25$
&
$0.03 \pm 0.05$
&
$4.01 \pm 1.80$
&
$0.01 \pm 0.03$
&
$6.13 \pm 1.86$
\\
\hline

Spambase
&
$0.06 \pm 0.08$
&
$0.48 \pm 0.02$
&
$12.45 \pm 2.37$
&
$0.28 \pm 0.08$
&
$14.78 \pm 1.87$
&
$0.15 \pm 0.09$
&
$16.74 \pm 1.82$
&
$0.09 \pm 0.09$
&
$18.47 \pm 1.41$
\\
\hline

\end{tabular}
}
\end{table}

\begin{table}[ht]
\centering
\caption{Results for instances predicted as class $+1$. The column $\sigma(S(\mathbf{x}))$ reports the mean sigmoid-transformed prediction score of the explained instances. For each explanation type, $\sigma(\mathrm{MCT}(\mathcal{E}))$ denotes the mean sigmoid-transformed Minimum Confidence Threshold and $|\mathcal{E}|$ denotes the mean explanation length. Higher $\sigma(\mathrm{MCT}(\mathcal{E}))$ values indicate stronger confidence guarantees.}
\label{tab:class1_results}
\resizebox{\textwidth}{!}{
\begin{tabular}{|c|c|cc|cc|cc|cc|}
\hline
&
&
\multicolumn{2}{c|}{\textbf{AXp}}
&
\multicolumn{2}{c|}{\textbf{CA-AXp (25\%)}}
&
\multicolumn{2}{c|}{\textbf{CA-AXp (50\%)}}
&
\multicolumn{2}{c|}{\textbf{CA-AXp (75\%)}}
\\
\cline{3-10}
\textbf{Dataset}
&
\boldmath$\sigma(S(\mathbf{x}))$
&
\boldmath$\sigma(\mathrm{MCT}(\mathcal{E}))$
&
\boldmath$|\mathcal{E}|$
&
\boldmath$\sigma(\mathrm{MCT}(\mathcal{E}))$
&
\boldmath$|\mathcal{E}|$
&
\boldmath$\sigma(\mathrm{MCT}(\mathcal{E}))$
&
\boldmath$|\mathcal{E}|$
&
\boldmath$\sigma(\mathrm{MCT}(\mathcal{E}))$
&
\boldmath$|\mathcal{E}|$
\\
\hline

MAGIC
&
$0.84 \pm 0.15$
&
$0.59 \pm 0.09$
&
$3.52 \pm 1.12$
&
$0.70 \pm 0.11$
&
$4.03 \pm 0.94$
&
$0.77 \pm 0.14$
&
$4.38 \pm 1.00$
&
$0.81 \pm 0.15$
&
$5.29 \pm 0.95$
\\
\hline

Adult
&
$0.87 \pm 0.14$
&
$0.58 \pm 0.07$
&
$4.86 \pm 1.02$
&
$0.71 \pm 0.11$
&
$5.06 \pm 0.89$
&
$0.79 \pm 0.13$
&
$5.60 \pm 0.86$
&
$0.84 \pm 0.14$
&
$6.69 \pm 0.76$
\\
\hline

Mushroom
&
$0.99 \pm 0.03$
&
$0.62 \pm 0.13$
&
$3.30 \pm 1.71$
&
$0.91 \pm 0.08$
&
$4.65 \pm 1.60$
&
$0.97 \pm 0.05$
&
$6.03 \pm 1.62$
&
$0.99 \pm 0.04$
&
$8.03 \pm 1.72$
\\
\hline

Spambase
&
$0.92 \pm 0.11$
&
$0.52 \pm 0.02$
&
$13.25 \pm 2.39$
&
$0.71 \pm 0.08$
&
$15.15 \pm 1.81$
&
$0.83 \pm 0.11$
&
$17.45 \pm 1.45$
&
$0.89 \pm 0.11$
&
$19.12 \pm 1.44$
\\
\hline

\end{tabular}
}
\end{table}

A first observation is that traditional abductive explanations frequently provide substantially weaker confidence guarantees than the confidence associated with the explained instance itself. In many cases, the transformed MCT values are close to 0.5 even when the corresponding predictions exhibit high confidence. This indicates that the set of instances characterized by an abductive explanation often includes instances lying much closer to the decision boundary than the original sample. Consequently, explanations that simultaneously cover both high-confidence and low-confidence samples tend to be less reliable for interpreting the model's behavior.

The results also show that confidence-aware constraints consistently improve the MCT of the generated explanations. As the confidence threshold increases from 25\% to 75\% of the original prediction score, the corresponding MCT values move progressively farther from the decision boundary, indicating stronger confidence guarantees.

This improvement comes at the cost of larger explanations. In general, stricter confidence requirements require additional feature assignments to restrict the set of instances satisfying the explanation. Consequently, explanation length tends to increase as the threshold becomes more restrictive.

Figure~\ref{fig:confidence_gain_vs_size} provides a complementary view of this trade-off by relating the gain in confidence to the relative increase in explanation length. The horizontal axis reports the percentage of additional feature assignments required with respect to the total number of features in the dataset, whereas the vertical axis reports the corresponding increase in confidence. More precisely, each point in the figure is represented by the pair $(x,y)$, where
\[
x = 100 \cdot \frac{|\mathcal{E}_{CA}|-|\mathcal{E}_{AXp}|}{d}
\]
denotes the percentage increase in explanation length relative to the total number of features (d) in the dataset. Here, $\mathcal{E}_{AXp}$ denotes the traditional abductive explanation of an instance, whereas $\mathcal{E}_{CA}$ denotes the corresponding confidence-aware abductive explanation generated for the same instance and confidence threshold. The vertical coordinate is defined as
\[
y = \sigma(\mathrm{MCT}(\mathcal{E}_{CA}))-\sigma(\mathrm{MCT}(\mathcal{E}_{AXp}))
\]
denotes the corresponding improvement in the sigmoid-transformed MCT. Consequently, points farther to the right correspond to larger increases in explanation length, whereas points higher in the plot correspond to larger confidence gains.

As expected, stronger confidence requirements generally lead to larger explanations. Nevertheless, the increase in explanation length is often modest compared to the confidence gains obtained. This behavior is particularly evident for Spambase, the dataset with the largest number of features in our experiments $(d=57)$. For the 75\% threshold, the average explanation length increases from 13.25 to 19.12, corresponding to an increase of 5.87 features. Despite this absolute increase, the relative growth in explanation length remains close to 10\% of the available features, while the corresponding sigmoid-transformed MCT increases from 0.52 to 0.89. 

More broadly, the curves originate at $(0,0)$, corresponding to the original abductive explanation before any confidence-aware constraint is imposed. Each point therefore represents the additional explanation length required to achieve a given confidence gain relative to the original explanation. Several datasets exhibit substantial confidence improvements with only modest increases in explanation size. For example, in Adult, a relative growth in explanation length below 6\% already produces a confidence gain of about 0.21. Similarly, for Mushroom, a relative increase of approximately 6\% already provides a confidence gain of approximately 0.29. These examples suggest that substantial improvements in confidence can often be achieved with only a modest increase in explanation size. These findings indicate that stronger confidence guarantees do not necessarily require substantially larger explanations, even in high-dimensional datasets.

\begin{figure}[ht]
\centering
\begin{tikzpicture}
\begin{axis}[
    width=\textwidth,
    height=7cm,
    xlabel={Relative increase in explanation length (\% of features)},
    ylabel={Confidence gain},
    xmin=0,
    xmax=25,
    ymin=0,
    ymax=0.45,
    grid=major,
    legend style={
        at={(0.98,0.02)},
        anchor=south east
    }
]

\addplot[
    mark=*,
    thick
]
coordinates {
    (5.1,0.11)
    (8.6,0.18)
    (17.7,0.22)
};
\addlegendentry{MAGIC}

\addplot[
    mark=square*,
    thick
]
coordinates {
    (1.4,0.13)
    (5.3,0.21)
    (13.07,0.26)
};
\addlegendentry{Adult}

\addplot[
    mark=triangle*,
    thick
]
coordinates {
    (6.1,0.29)
    (12.4,0.35)
    (21.5,0.37)
};
\addlegendentry{Mushroom}

\addplot[
    mark=diamond*,
    thick
]
coordinates {
    (3.3,0.19)
    (7.4,0.31)
    (10.3,0.37)
};
\addlegendentry{Spambase}

\end{axis}
\end{tikzpicture}
\caption{Confidence gain versus relative explanation growth for instances predicted as class $+1$. The horizontal axis reports the increase in explanation length normalized by the total number of features in the dataset, while the vertical axis reports the corresponding increase in sigmoid-transformed MCT. Each curve corresponds to confidence thresholds of 25\%, 50\%, and 75\% of the original prediction score and is measured relative to the original abductive explanation (AXp).}
\label{fig:confidence_gain_vs_size}
\end{figure}

Figure~\ref{fig:confidence_gain_vs_size_neg} shows that the same trade-off observed for class $+1$ also appears for instances predicted as class $-1$. In this case, improvements in confidence correspond to decreases in the sigmoid-transformed MCT, since lower values indicate stronger confidence guarantees. The figure reveals that substantial confidence improvements can often be achieved with relatively small increases in explanation size. For example, in Mushroom, a relative increase of less than 4\% in explanation length already reduces the sigmoid-transformed MCT by approximately 0.14. Similarly, in Spambase, a relative increase of approximately 7.5\% yields a reduction of about 0.33 in sigmoid-transformed MCT. Even for the strictest threshold, Spambase achieves one of the largest confidence improvements observed in the experiments, reducing the average sigmoid-transformed MCT by approximately 0.39 while requiring an increase of only about 10.6\% of the available features.

Taken together with the results for class $+1$, these observations indicate that confidence-aware abductive explanations provide a consistent mechanism for improving confidence guarantees across both classes while requiring only modest increases in explanation size.

\begin{figure}[ht]
\centering
\begin{tikzpicture}
\begin{axis}[
    width=\textwidth,
    height=7cm,
    xlabel={Relative increase in explanation length (\% of features)},
    ylabel={$\sigma(\mathrm{MCT}(\mathcal{E}_{CA}))-\sigma(\mathrm{MCT}(\mathcal{E}_{AXp}))$},
    xmin=0,
    xmax=22,
    ymin=-0.45,
    ymax=0,
    grid=major,
    legend style={
        at={(0.98,0.02)},
        anchor=south east
    }
]

\addplot[mark=*, thick]
coordinates {
    (4.6,-0.12)
    (12.0,-0.19)
    (19.0,-0.26)
};
\addlegendentry{MAGIC}

\addplot[mark=square*, thick]
coordinates {
    (4.4,-0.09)
    (11.0,-0.15)
    (18.1,-0.19)
};
\addlegendentry{Adult}

\addplot[mark=triangle*, thick]
coordinates {
    (2.0,-0.11)
    (3.6,-0.14)
    (13.3,-0.16)
};
\addlegendentry{Mushroom}

\addplot[mark=diamond*, thick]
coordinates {
    (4.1,-0.20)
    (7.5,-0.33)
    (10.6,-0.39)
};
\addlegendentry{Spambase}

\end{axis}
\end{tikzpicture}
\caption{Confidence gain versus relative explanation growth for instances predicted as class $-1$. The horizontal axis reports the increase in explanation length normalized by the total number of features in the dataset, while the vertical axis is computed as $\sigma(\mathrm{MCT}(\mathcal{E}_{CA}))-\sigma(\mathrm{MCT}(\mathcal{E}_{AXp}))$. Since lower values indicate stronger confidence guarantees for class $-1$, more negative values correspond to larger confidence improvements. Each curve corresponds to confidence thresholds of 25\%, 50\%, and 75\% of the original prediction score and is measured relative to the original abductive explanation (AXp).}
\label{fig:confidence_gain_vs_size_neg}
\end{figure}
\FloatBarrier

Overall, the results confirm the limitation identified in Section~\ref{sec:conf_margin}: preserving the predicted class alone does not guarantee that an explanation preserves the confidence associated with the prediction. Confidence-aware abductive explanations address this issue by explicitly controlling the minimum confidence guaranteed by the explanation. Although stronger guarantees generally require larger explanations, the increase in explanation size is often moderate, and in a substantial number of cases stronger guarantees can be obtained without increasing explanation cardinality.

\section{Conclusions and Future Work}
\label{sec:conclusion}

This paper introduced the concept of Minimum Confidence Threshold (MCT), a measure that quantifies the weakest confidence guarantee provided by an abductive explanation. Building upon this concept, we proposed confidence-aware abductive explanations, which extend traditional abductive explanations by requiring not only preservation of the predicted class but also a user-specified confidence guarantee.

To support the generation of such explanations, we formulated the computation of MCT as an optimization problem and introduced a confidence-aware explanation algorithm capable of producing minimal explanations that satisfy a desired confidence threshold.

Experimental results on several benchmark datasets revealed that traditional abductive explanations often provide substantially weaker confidence guarantees than the confidence associated with the explained instance itself. In contrast, confidence-aware abductive explanations consistently improved the minimum confidence guaranteed by an explanation. Although stronger guarantees generally required larger explanations, the increase in explanation size was often moderate. Moreover, our results showed that explanations with stronger confidence guarantees can frequently be obtained without increasing explanation cardinality, highlighting the importance of searching beyond the original abductive explanation.

These findings are particularly relevant in practical machine learning applications, where predictive confidence is often as important as the predicted class itself. Many modern models provide confidence estimates alongside their predictions, and decision makers routinely use this information when assessing risk and reliability. By incorporating confidence guarantees directly into the explanation process, the proposed framework allows explanations to better reflect not only what decision is made by the model, but also how confidently that decision is supported throughout the region characterized by the explanation.

Overall, the proposed framework provides a principled mechanism for controlling the confidence guarantees associated with logic-based explanations, enabling users to explicitly trade explanation succinctness for stronger confidence guarantees when desired. This contributes to making explanations more informative and trustworthy in real-world scenarios where both correctness and predictive confidence play a central role in decision making.

This work also suggests several directions for future investigation. Although this study focused on gradient boosted trees, the proposed framework can be directly adapted to other machine learning models that provide confidence scores, including deep neural networks. Exploring the behavior of confidence-aware explanations in such models may provide further insights into the relationship between explanation structure and predictive confidence.

Another promising direction is the extension of the proposed framework to multiclass classification problems. In this setting, the notion of minimum confidence threshold could be defined with respect to the confidence margin between the predicted class and its competitors, enabling confidence-aware explanations for more complex decision scenarios.

Finally, the concept of MCT is not restricted to traditional abductive explanations. Future work may investigate how confidence guarantees can be incorporated into other logic-based explanation formalisms, including inflated abductive explanations~\cite{izza2024delivering,rocha2025generalizing,izza2025most}. This could provide a unified framework for reasoning about confidence guarantees across different forms of logic-based explanations.




\printcredits

\FloatBarrier

\bibliographystyle{cas-model2-names}

\bibliography{cas-refs}





\end{document}